%
%
%

\documentclass{svproc}
%
%

\usepackage{mathtools}
\usepackage{amssymb}
\usepackage{url}

\begin{document}
\mainmatter              
\title{Limitations of the Lipschitz constant as a defense against adversarial examples}
\titlerunning{Limitations of the Lipschitz constant}  
%
\author{Todd Huster \and Cho-Yu Jason Chiang \and 
Ritu Chadha }
\authorrunning{Todd Huster et al.} 
%
\tocauthor{Todd Huster, Cho-Yu Jason Chiang, Ritu Chadha}
\institute{Perspecta Labs, Basking Ridge, NJ 07920, USA.\\
\email{thuster@perspectalabs.com}}

\maketitle              

\begin{abstract}

Several recent papers have discussed utilizing Lipschitz constants to limit the susceptibility of neural networks to adversarial examples. We analyze recently proposed methods for computing the Lipschitz constant. We show that the Lipschitz constant may indeed enable adversarially robust neural networks. However, the methods currently employed for computing it suffer from theoretical and practical limitations. We argue that addressing this shortcoming is a promising direction for future research into certified adversarial defenses.

\keywords{adversarial examples, Lipschitz constant}

\end{abstract}

\section{Introduction}

Machine learning models, such as deep neural networks (DNNs), have been remarkably successful in performing many tasks~\cite{Collobert2011NaturalLP}~\cite{Hinton2012DeepNN}~\cite{NIPS2012_4824}. However, it has been shown that they fail catastrophically when very small distortions are added to normal data examples~\cite{Goodfellow2014ExplainingAH}~\cite{Szegedy2013IntriguingPO}. These \textit{adversarial examples} are easy to produce~\cite{Goodfellow2014ExplainingAH}, transfer from one model to another~\cite{Papernot2017PracticalBA}~\cite{Tramr2017TheSO}, and are very hard to detect~\cite{Carlini2017AdversarialEA}. 

Many methods have been proposed to address this problem, but most have been quickly overcome by new attacks~\cite{Athalye2018ObfuscatedGG}~\cite{Carlini2017TowardsET}. This cycle has happened regularly enough that the burden of proof is on the defender that her or his defense will hold up against future attacks. One promising approach to meet this burden is to compute and optimize a \textit{certificate}: a guarantee that no attack of a certain magnitude can change the classifier's decision for a large majority of examples. 

In order to provide such a guarantee, one must be able to bound the possible outputs for a region of input space. This can be done for the region around a specific input~\cite{Kolter2017ProvableDA} or by globally bounding the sensitivity of the function to shifts on the input, i.e., the function's Lipschitz constant~\cite{Raghunathan2018CertifiedDA}~\cite{Tsuzuku2018LipschitzMarginTS}. Once the output is bounded for a given input region, one can check whether the class changes. If not, there is no adversarial example in the region. If the class does change, the model can alert the user or safety mechanisms to the possibility of manipulation. 

We argue in this paper that despite the achievements reported in~\cite{Raghunathan2018CertifiedDA}, Lipschitz-based approaches suffer from some representational limitations that may prevent them from achieving higher levels of performance and being applicable to more complicated problems. We suggest that directly addressing these limitations may lead to further gains in robustness. 

This paper is organized as follows: Section \ref{sec:Lipschitz} defines the Lipschitz constant and shows that classifiers with strong Lipschitz-based guarantees exist. Section \ref{sec:atomic} describes a simple method for computing a Lipschitz constant for deep neural networks, while Section \ref{sec:alim} presents experimental and theoretical limitations for this method. Section \ref{sec:sdp} describes an alternative method for computing a Lipschitz constant and presents some of its limitations. Finally, Section \ref{sec:concl} presents conclusions and a long term goal for future research.

\section{Lipschitz Bounds}\label{sec:Lipschitz}
We now define the Lipschitz constant referenced throughout this paper. 
\begin{definition}\label{def:Lipschitz}
Let a function $f$ be called $k$-Lipschitz continuous if 

\begin{eqnarray}
\forall x_{1}, x_{2} \in X: d_{Y}(f(x_{1}),f(x_{2})) \leq k d_{X} (x_{1},x_{2})
\end{eqnarray}

where $d_{X}$ and $d_{Y}$ are the metrics associated with vector spaces $X$ and $Y$, respectively.
\end{definition}

Loosely speaking, a Lipschitz constant $k$ is a bound on the slope of $f$: if the input changes by $ \epsilon $, the output changes by at most $k \epsilon $. If there is no value $\hat{k}$ where $f$ is $\hat{k}$-Lipschitz continuous and $\hat{k}<k$, then we say $k$ is the minimal Lipschitz constant. In this paper, we restrict our analysis to Minkowski $L_p$ spaces with distance metric $\|\cdot\|_p$.
We now show that global Lipschitz constants can in principle be used to provide certificates far exceeding the current state-of-the-art, and thus are worthy of further development. 
\begin{proposition}\label{prop:lip}
Let $\mathcal{D}$ be a dataset $\mathcal{D}=\big\{(x_i,y_i) \mid i=1,...,m, x_i \in \mathbb{R}^d, y_i \in \{-1,1\}\big\}$ where $x_i \neq x_j$ for $y_i \neq y_j$.  Let $c$ be a positive scalar such that
\begin{equation}
  \forall i,j: y_i \neq y_j \rightarrow ||x_i-x_j||_p>c
\end{equation}
for $p \geq1$. There exists a $\frac{2}{c}$-Lipschitz function $f: X \rightarrow \mathbb{R}$ where $\forall i: sign(f(x_i + \delta)) = y_i$ for $||\delta||_p < \frac{c}{2}$.
\end{proposition}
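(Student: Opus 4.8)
The plan is to construct $f$ explicitly as a rescaled distance function. Write $S_{+}=\{x_i : y_i=1\}$ and $S_{-}=\{x_i : y_i=-1\}$; if either set is empty the separation hypothesis is vacuous and the constant function $f\equiv y_1$ is $0$-Lipschitz and already satisfies the conclusion, so assume both are non-empty. Then define
\begin{equation}
  f(x) = 1 - \frac{2}{c}\,\min_{x_i \in S_{+}} \|x - x_i\|_p ,
\end{equation}
an affine image of the $L_p$ distance from $x$ to the nearest positively labeled example. The geometric picture is that the open balls of radius $c/2$ around the examples are pairwise disjoint across classes — a point lying in two such balls around $x_i,x_j$ with $y_i\neq y_j$ would force $\|x_i-x_j\|_p<c$ — so ``distance to $S_{+}$ is below $c/2$'' cleanly separates the positive neighborhoods from the negative ones.

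I would then carry out two checks. \textbf{Lipschitz constant.} Each map $x\mapsto\|x-x_i\|_p$ is $1$-Lipschitz in $\|\cdot\|_p$ by the triangle inequality, a pointwise minimum of $1$-Lipschitz functions is $1$-Lipschitz, and composing with the affine map $t\mapsto 1-\tfrac{2}{c}t$ multiplies the constant by $2/c$; hence $f$ is $\tfrac{2}{c}$-Lipschitz. \textbf{Signs on the perturbation balls.} Fix $i$ and $\delta$ with $\|\delta\|_p<c/2$. If $y_i=1$ then $\min_{x_k\in S_{+}}\|x_i+\delta-x_k\|_p\le\|\delta\|_p<c/2$, so $f(x_i+\delta)>1-\tfrac{2}{c}\cdot\tfrac{c}{2}=0$. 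If $y_i=-1$ then for every $x_k\in S_{+}$ the reverse triangle inequality gives $\|x_i+\delta-x_k\|_p\ge\|x_i-x_k\|_p-\|\delta\|_p>c-\tfrac{c}{2}=\tfrac{c}{2}$, so the minimum exceeds $c/2$ and $f(x_i+\delta)<0$. In either case $\mathrm{sign}(f(x_i+\delta))=y_i$, which is the claim.

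I do not expect a real obstacle here: the construction is elementary and both verifications are one-line inequalities. The only things needing care are (i) pinning the Lipschitz constant at exactly $2/c$, which is why I use the one-sided distance to $S_{+}$ rather than a symmetric difference of distances to $S_{+}$ and $S_{-}$ (the latter would need a compensating factor of $1/c$ in place of $2/c$), and (ii) making sure the strict perturbation bound $\|\delta\|_p<c/2$ is what produces the strict sign inequalities $f(x_i+\delta)>0$ and $f(x_i+\delta)<0$, so that $\mathrm{sign}(\cdot)$ is genuinely $\pm1$ and never $0$.
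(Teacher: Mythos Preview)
Your argument is correct, and your construction is genuinely different from the paper's. The paper builds a symmetric piecewise ``tent'' function,
\[
f(x)=
\begin{cases}
1-\tfrac{2}{c}\|x-x^{+}\|_p & \text{if }\|x-x^{+}\|_p<c/2,\\
-1+\tfrac{2}{c}\|x-x^{-}\|_p & \text{if }\|x-x^{-}\|_p<c/2,\\
0 & \text{otherwise},
\end{cases}
\]
and then must check (i) mutual exclusivity of the cases, (ii) continuity across the piecewise boundaries and across changes in the nearest neighbor $x^{\pm}$, and (iii) the Lipschitz bound on each piece via the reverse triangle inequality. Your one-sided distance $f(x)=1-\tfrac{2}{c}\operatorname{dist}(x,S_{+})$ sidesteps all of that: the Lipschitz constant follows in one line from ``finite min of $1$-Lipschitz functions is $1$-Lipschitz,'' and there are no case boundaries to patch. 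What the paper's construction buys in return is a bounded, class-symmetric function with $f\equiv 0$ away from the data; yours is unbounded below and asymmetric, but that is irrelevant for the proposition as stated. Your handling of the degenerate single-class case and of the strict inequalities (so that $\mathrm{sign}$ is never $0$) is also slightly more careful than the paper's exposition.
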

\begin{proof}
We relegate the full proof to appendix \ref{proof:prop}, but we define a function meeting the criteria of the proposition that can be constructed for any dataset:
\begin{equation}
f(x) =
  \begin{cases}
                                   1-\frac{2}{c}||x-x^+||_p & \text{if $||x-x^+||_p< \frac{c}{2}$} \\
                                   -1+\frac{2}{c}||x-x^-||_p & \text{if $||x-x^-||_p< \frac{c}{2}$} \\
  0 & \text{otherwise}
  \end{cases}
\end{equation}
where $x^+$ and $x^-$ are the closest vectors to $x$ in $\mathcal{D}$ with $y=1$ and $y=-1$, respectively.

\end{proof}

The function $f$ described above shows that the Lipschitz method can be used to provide a robustness guarantee against any perturbation of magnitude less than $\frac{c}{2}$. This can be extended to a multi-class setting in a straightforward manner by using a set of one vs. all classifiers.  Table \ref{tab:dist} shows the distance to the closest out-of-class example for the 95th percentile of samples; i.e., 95\% of samples are at least $c$ away from the nearest neighbor of a different class. Proposition \ref{prop:lip} implies the existence of a classifier that is provably robust for 95\% of samples against perturbations of magnitude $\frac{c}{2}$. This bound would far exceed the certifications offered by current methods, i.e.,~\cite{Kolter2017ProvableDA}~\cite{Raghunathan2018CertifiedDA}~\cite{Tsuzuku2018LipschitzMarginTS}, and even the (non-certified) adversarial performance of~\cite{Madry2017TowardsDL}.
\begin{table}
\caption{Distances to closest out-of-class example, 95th percentile.}
\begin{center}\label{tab:dist}
\begin{tabular}{c c c}
\hline
Metric   &     MNIST& CIFAR-10 \\
\hline
$L_1$  &     29.4& 170.8\\
$L_2$  &   4.06& 4.58\\
$L_{\infty}$  &   0.980& 0.392\\
\hline
\end{tabular}
\end{center}
\end{table}

It is important to note that the existence of a $\frac{c}{2}$-Lipschitz function in Proposition \ref{prop:lip} does not say anything about how easy it is to learn such a function from examples that generalizes to new ones. Indeed, the function described in the proof is likely to generalize poorly. However, we argue that current methods for optimizing the Lipschitz constant of a neural network suffer much more from \textit{underfitting} than \textit{overfitting}: training and validation certificates tend to be similar, and adding model capacity and training iterations do not appear to materially improve the training certificates. This suggests that we need more powerful models. The remainder of this paper is focused on how one might go about developing more powerful models.
\section{Atomic Lipschitz Constants}\label{sec:atomic}

The simplest method for constructing a Lipschitz constant for a neural network composes the Lipschitz constants of atomic components. If $f_1$ and $f_2$ are $k_1$- and $k_2$-Lipschitz continuous functions, respectively, and $f(x) = f_2(f_1(x))$, then $f$ is $k$-Lipschitz continuous where $k=k_1k_2$. Applying this recursively provides a bound for an arbitrary neural network.

For many components, we can compute the minimal Lipschitz constant exactly. For linear operators, $l_{W,b}(x) = Wx+b$, the minimal Lipschitz constant is given by the matrix norm of $W$ induced by $L_p$:

\begin{eqnarray}
\|W\|_p=\sup_{x\neq0}\frac{\|Wx\|_p}{\|x\|_p}
\end{eqnarray}

For $p=\infty$, this is equivalent to the largest magnitude row of $W$:

\begin{eqnarray}\label{def:inf_norm}
\|W\|_\infty=\max_{w_i \in W}\|w_i\|_1
\end{eqnarray}

The $L_2$ norm of $W$ is known as its \textit{spectral norm} and is equivalent to its largest singular value.  The element-wise ReLU function $ReLU(x)=max(x,0)$ has a Lipschitz constant of 1 regardless of the choice of $p$.  Therefore, for a neural network $f$ composed of $n$ linear operators $l_{W_1,b_2},...,l_{W_n,b_n}, $ and ReLUs, a Lipschitz constant $k$ is provided by

\begin{eqnarray}\label{eq:k}
k = \prod_{i=1}^{n}\|W_i\|_p
\end{eqnarray}

Several recent papers have utilized this concept or an extension of it to additional layer types. ~\cite{Szegedy2013IntriguingPO} uses it to analyze the theoretical sensitivity of deep neural networks. ~\cite{Ciss2017ParsevalNI} and~\cite{Qian2018L2NonexpansiveNN} enforce constraints on the singular values of matrices as a way of increasing robustness to existing attacks. Finally,~\cite{Tsuzuku2018LipschitzMarginTS} penalizes the spectral norms of matrices and uses equation \ref{eq:k} to compute a Lipschitz constant for the network.

\section{Limitations of Atomic Lipschitz Constants}\label{sec:alim}
One might surmise that this approach can solve the problem of adversarial examples: compose enough layers together with the right balance of objectives, overcoming whatever optimization difficulties arise, and one can train classifiers with high accuracy, guaranteed low variability, and improved robustness to attacks. Unfortunately, this does not turn out to be the case, as we will show first experimentally and then theoretically. 

\subsection{Experimental Limitations}

First, we can observe the limits of this technique in a shallow setting. We train a two layer fully connected neural network with 500 hidden units $f=l_{W_2,b_2} \circ ReLU \circ l_{W_1,b_1}$ on the MNIST dataset. We penalize $\|W_1\|_p\|W_2\|_p$ with weight $\lambda_p$. We denote the score for class $i$ as $f_i(x)$ and the computed Lipschitz constant of the difference between $f_i(x)$ and $f_j(x)$ as $k_{ij}$. We certify the network for example $x$ with correct class $i$ against a perturbation of magnitude $\epsilon$ by verifying that $f_i(x) - f_j(x) - k_{ij}\epsilon > 0$ for $i \neq j$.

Figures \ref{fig:lil2} (a) and (b) show results for $L_{\infty}$ and $L_2$, respectively. In both cases, adding a penalty provides a larger region of certified robustness, but increasing the penalty hurts performance on unperturbed data and eventually ceases to improve the certified region. This was true for both test and training (not shown) data. This level of certification is considerably weaker than our theoretical limit from Proposition \ref{prop:lip}. 

There also does not appear to be much certification benefit to adding more layers. We extended the methodology to multi-layer networks and show the results in figures \ref{fig:lil2} (c) and (d). Using the $\lambda_{\infty}$ penalty proved difficult to optimize for deeper networks. The $\lambda_{2}$ penalty was more successful, but only saw a mild improvement over the shallow model. The results in (d) also compare favorably to those of~\cite{Tsuzuku2018LipschitzMarginTS}, which uses a 4 layer convolutional network. 

\begin{figure}
\centering
\includegraphics[scale=0.37]{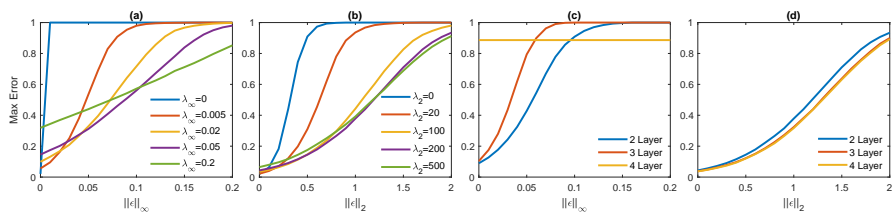}
\caption{Experimental results from atomic Lipschitz penalties. On the left, the $L_{\infty}$ norm is used for both the perturbation and the penalty, while on the right, $L_2$ is used}\label{fig:lil2}
\end{figure}

\subsection{Theoretical Limitations}
We now consider the set of neural networks with a given atomic Lipschitz bound and the functions it can compute. This set of functions is important because it limits how well a neural network can split a dataset with particular margins, and thus how strong the certificate can be. 

\begin{definition}\label{def:nn}
Let $\mathcal{A}_k^p$ be the set of neural networks with an atomic Lipschitz bound of k in $L_p$ space:

\begin{equation}
\mathcal{A}_k^p \triangleq \Big\{l_{W_n,b_n} \circ \dots \circ ReLU \circ l_{W_1,b_1} \mid \prod_i\|W_i\|_p \leq k , n \geq 2 \Big\}
\end{equation}
\end{definition} 

We focus our analysis here on $L_{\infty}$ space. To show the limitations of $\mathcal{A}_k^{\infty}$, consider the simple 1-Lipschitz function $f(x) = |x|$. Expressing $f$ with ReLU's and linear units is simple exercise, shown in figure \ref{fig:abs}. However, since 
\begin{equation}
\big\|\begin{bmatrix} 1 & -1 \end{bmatrix}\big\|_{\infty}\Bigg\|\begin{bmatrix} 1 \\ 1 \end{bmatrix}\Bigg\|_{\infty}=2,
\end{equation}
the neural network in figure \ref{fig:abs} is a member of $\mathcal{A}_2^{\infty}$, but not $\mathcal{A}_1^{\infty}$. This is only one possible implementation of $|x|$, but as we will show, the atomic component method cannot express this function with a Lipschitz bound lower than 2, and the situation gets worse as more non-linear variations are added.  

\begin{figure}
\centering
\includegraphics[scale=0.6]{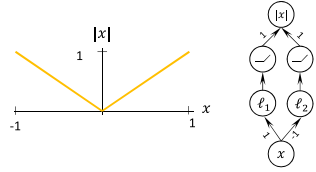}
\caption{The absolute value function (left) and a neural network that implements it (right)}\label{fig:abs}
\end{figure}

We now provide two definitions that will help delineate the functions that the neural networks in $\mathcal{A}_k^{\infty}$ can compute. 

\begin{definition}\label{def:var}
For a function $f: \mathbb{R} \rightarrow \mathbb{R}$, let the total variation be defined as 
\begin{equation}
V_{a}^{b}(f) \triangleq \sup_{T \in \mathcal{T}}\sum_{t_i \in T} \lvert f(t_i)-f(t_{i-1})\rvert
\end{equation}
where $\mathcal{T}$ is the set of partitions of the interval $[a,b]$.
\end{definition}

The total variation captures how much a function changes over its entire domain, which we will use on the gradients of neural networks. $V_{-\infty}^{\infty}$ is finite for neural network gradients, as the gradient only changes when a ReLU switches states, and this can only happen a finite number of times for finite networks. Clearly, for the slope of the absolute value function, this quantity is 2: the slope changes from -1 to 1 at $x=0$. 

\begin{definition}\label{def:intrinsic}
For a function $f: \mathbb{R} \rightarrow \mathbb{R}$, define a quantity

\begin{equation}
I(f) \triangleq V_{-\infty}^{\infty}(f) + \lvert f(\infty)\rvert + \lvert f(-\infty)\rvert
\end{equation}
and call it the intrinsic variability of $f$.
\end{definition}

As we will show, the intrinsic variability is a quantity that is nonexpansive under the ReLU operation. The intrinsic variability the slope of the absolute value function is 4: we add the magnitude of the slopes at the extreme points, 1 in each case, to the total variation of 2. We now begin a set of proofs to show that $\mathcal{A}_k^{\infty}$ is limited in the functions it can approximate. This limit does not come from the Lipschitz constant of a function $f$, but by the intrinsic variability of its derivative, $f'$. 

\begin{lemma}\label{lemma:lin_i}
For a linear combination of functions $f(x) = \sum_i w_i f_i(x)$,
\begin{equation}
I(f') \leq \sum_i |w_i|I(f_i').
\end{equation}
\end{lemma}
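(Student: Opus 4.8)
The plan is to reduce the lemma to two elementary properties of total variation — subadditivity and positive homogeneity — together with the triangle inequality applied to the two boundary terms in the definition of $I$ (Definition \ref{def:intrinsic}). The starting observation is that differentiation is linear, so $f'(x) = \sum_i w_i f_i'(x)$ wherever the relevant derivatives exist; for the piecewise-linear networks we care about this holds off a finite set, and the one-sided limits $f_i'(\pm\infty)$ exist because each $f_i'$ is eventually constant, so $f'(\pm\infty)$ exists as well and the quantities $I(f')$, $I(f_i')$ are all well defined.

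First I would handle the total-variation term. Directly from Definition \ref{def:var}, for any interval and any functions $g,h$ one gets $V_a^b(g+h)\le V_a^b(g)+V_a^b(h)$ by applying the triangle inequality to each summand $|(g+h)(t_i)-(g+h)(t_{i-1})|$ inside the supremum, and $V_a^b(cg)=|c|\,V_a^b(g)$ by factoring $|c|$ out of every term. Iterating subadditivity over the finitely many indices and then applying homogeneity gives, for every $M$, the bound $V_{-M}^{M}(f')\le\sum_i|w_i|\,V_{-M}^{M}(f_i')$; since $V_{-M}^{M}$ is nondecreasing in $M$ and the paper has already noted these totals are finite for network gradients, letting $M\to\infty$ yields $V_{-\infty}^{\infty}(f')\le\sum_i|w_i|\,V_{-\infty}^{\infty}(f_i')$. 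Second, for the endpoint terms I would simply use the triangle inequality on the limits: $|f'(\infty)|=\big|\sum_i w_i f_i'(\infty)\big|\le\sum_i|w_i|\,|f_i'(\infty)|$, and likewise at $-\infty$. Adding the three inequalities and regrouping the right-hand side by index $i$ produces $I(f')\le\sum_i|w_i|\,I(f_i')$, which is exactly Lemma \ref{lemma:lin_i}.

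I do not expect a genuine obstacle here; the only point requiring a moment of care is the passage from the finite-interval total variations to $V_{-\infty}^{\infty}$, and that is routine given monotonicity of $V_a^b$ in the interval and the already-established finiteness of these variations for the functions in play. Everything else is the triangle inequality used twice.
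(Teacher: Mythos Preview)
Your proposal is correct and is essentially the paper's own argument: both linearize $f'=\sum_i w_i f_i'$, bound the variation term via the triangle inequality on increments (you phrase this as subadditivity plus homogeneity of $V$, the paper applies it directly to a maximizing partition), and bound the two endpoint terms by the triangle inequality, then regroup by index. The only cosmetic difference is that you pass through finite intervals $[-M,M]$ and take a limit, whereas the paper works on $(-\infty,\infty)$ using a maximal partition (which exists here since the derivatives are piecewise constant with finitely many jumps).
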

\begin{proof}
Proof is relegated to appendix \ref{proof:lin}
\end{proof} 
\begin{definition}
Let a function $f:\mathbb{R} \rightarrow \mathbb{R} $ be called eventually constant if
\begin{equation}
\exists t_- \in \mathbb{R}, f'(t) = f'(t_-), t \leq t_-
\end{equation}
\begin{equation}
\exists t_+ \in \mathbb{R}, f'(t) = f'(t_+), t \geq t_+
\end{equation}
\end{definition}

\begin{lemma}\label{lemma:relu_i}
Let $f(t)$ be a function where $f'(t)$ is eventually constant. For the ReLU activation function $g(t) = max(f(t),0)$,
\begin{equation}
I(g') \leq I(f')
\end{equation}
\end{lemma}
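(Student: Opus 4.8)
The plan is to prove $I(g')\le I(f')$ by comparing the two functions jump by jump. First I would clear away trivialities: if $I(f')=\infty$ there is nothing to prove, so I may assume $f'$ is of bounded variation, and the relevant case --- which I would treat in detail --- is $f$ continuous and piecewise linear with finitely many breakpoints, so that $g=\max(f,0)$ is piecewise linear too and $f',g'$ are piecewise constant (the general bounded-variation case runs along the same lines using the Jordan decomposition of the variation measure of $f'$). It is convenient to recast $I$: for a piecewise constant $h$ with one-sided limits $h(\pm\infty)$,
\begin{equation}
I(h)=|h(-\infty)|+V_{-\infty}^{\infty}(h)+|h(\infty)|
\end{equation}
is just the total variation of $h$ on the extended line after appending the phantom value $0$ at each end, i.e.~the sum of the magnitudes of all its jumps. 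So the claim reduces to bookkeeping of jumps.

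The structural input is the pointwise identity $g'(t)=f'(t)$ on $\{f>0\}$ and $g'(t)=0$ on $\{f\le 0\}$. Hence $f'$ and $g'$ have identical jumps at every transition point strictly inside the open set $\{f>0\}$, and the entire discrepancy between $I(f')$ and $I(g')$ is localized to the finitely many maximal runs of linear pieces on which $f\le 0$ --- call them \emph{inactive runs} --- together with the two infinite ends. I would partition the line into these inactive runs and the complementary active regions, show each piece of the partition contributes a nonnegative amount to $I(f')-I(g')$, and sum.

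For the core estimate, fix one inactive run $R$, with first linear piece of slope $s_-$ and last of slope $s_+$. Across the internal breakpoints of $R$ the slope of $f$ moves from $s_-$ to $s_+$, so $f'$ accumulates variation at least $|s_+-s_-|$ there; since $g'\equiv 0$ on $R$, this is pure gain for $I(f')-I(g')$. At each of the two boundary points of $R$, $g'$ jumps to or from $0$ while $f'$ may also jump; a short case check --- mildly delicate when a zero of $f$ happens to coincide with a breakpoint, so that $f'$ is itself discontinuous there --- shows that the amount by which $g'$'s jump exceeds $f'$'s jump is at most $|s_-|$ at the left boundary and at most $|s_+|$ at the right. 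But $f$ has to cross $0$ downward to enter $R$ and upward to leave, which forces $s_-\le 0\le s_+$, whence $|s_+-s_-|=|s_-|+|s_+|$ and the single gain absorbs both losses. If $R$ is an infinite end on which $f\le 0$, only one boundary term occurs and the phantom contribution $|f'(\pm\infty)|$, combined with the variation of $f'$ over $R$, closes the gap by the triangle inequality; over an active region $g'=f'$ exactly. Because inactive runs are pairwise separated by active regions, no jump of $f'$ is charged twice, so summing the local inequalities gives $I(g')\le I(f')$.

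The hard part is not any individual estimate but making the grouping airtight. One must check that the entry and exit jumps of $g'$ are controlled by the \emph{inside} slopes $|s_-|,|s_+|$ of the run --- not by the adjacent active slopes --- even at points where $f'$ jumps; enumerate the degenerate configurations ($f$ vanishing identically on a sub-interval, $f$ one-signed on an entire infinite end, $f\le 0$ everywhere, $f>0$ everywhere); and verify that the phantom endpoint terms get assigned to the correct run. Each case collapses, via the sign constraint $s_-\le 0\le s_+$ and the triangle inequality, to a one-line inequality, but they must all be listed to be sure nothing is double counted.
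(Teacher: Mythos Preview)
Your proposal is correct and follows essentially the same route as the paper: both arguments partition the line into regions where $f>0$ (on which $g'=f'$) and maximal runs where $f\le 0$, then use the sign constraint $s_-\le 0\le s_+$ at each crossing so that the variation $|s_+-s_-|=|s_-|+|s_+|$ accumulated by $f'$ over an inactive run absorbs the two boundary jumps of $g'$. Your phantom-endpoint reformulation of $I$ and your uniform treatment of bounded versus unbounded inactive runs are a cleaner packaging of the same estimates the paper obtains by first handling the special case $f>0$ on $[t_-,t_+]$ and then patching in the negative excursions.
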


\begin{proof}
Proof is relegated to appendix \ref{proof:relu}
\end{proof} 
\begin{theorem}\label{theorem:inf_tv}

Let $f \in \mathcal{A}_k^{\infty}$ be a scalar-valued function $f: \mathbb{R}^d \rightarrow \mathbb{R}$. 

Let $h_{W_0,b_0} = f \circ l_{W_0,b_0}$ where $W_0 \in \mathbb{R}^{d \times 1}$, $b_0 \in \mathbb{R}^d$ and $\|W_0\|_{\infty} = 1$. For any selection of $W_0$ and $b_0$, 
\begin{equation}
I(h_{W_0,b_0}') \leq 2k.
\end{equation}
\end{theorem}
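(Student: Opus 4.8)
The plan is to restrict $f$ to the line $t \mapsto W_0 t + b_0$, which turns the multivariate network into a univariate continuous piecewise-linear function, and then to track how the intrinsic variability of the coordinatewise derivatives grows as the signal passes through each layer, invoking Lemma~\ref{lemma:lin_i} at the linear maps and Lemma~\ref{lemma:relu_i} at the ReLUs.

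Write $f = l_{W_n,b_n} \circ ReLU \circ l_{W_{n-1},b_{n-1}} \circ \cdots \circ ReLU \circ l_{W_1,b_1}$ with $\prod_{i=1}^n \|W_i\|_\infty \le k$, set $u^{(0)}(t) = W_0 t + b_0$, and for $1 \le \ell \le n-1$ set $z^{(\ell)}(t) = W_\ell\, u^{(\ell-1)}(t) + b_\ell$ and $u^{(\ell)}(t) = ReLU\big(z^{(\ell)}(t)\big)$, so that $h_{W_0,b_0}(t) = W_n\, u^{(n-1)}(t) + b_n$. First I would check, by induction on $\ell$, that every coordinate function $u^{(\ell)}_i$ and $z^{(\ell)}_i$ is continuous and piecewise linear in $t$ with finitely many pieces (an affine image of such a function is again of this form, and coordinatewise ReLU introduces only finitely many new breakpoints); in particular each of their derivatives is eventually constant, so Lemma~\ref{lemma:relu_i} is applicable at every ReLU.

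Now let $M^{(\ell)} = \max_i I\big((u^{(\ell)}_i)'\big)$ denote the largest intrinsic variability among the derivatives of the $\ell$-th layer's outputs. For the base case, $u^{(0)}_i(t) = (W_0)_i\, t + (b_0)_i$ has the constant derivative $(W_0)_i$, so by Definition~\ref{def:intrinsic} its total variation vanishes and $I\big((u^{(0)}_i)'\big) = 2\,|(W_0)_i|$; hence $M^{(0)} = 2\,\|W_0\|_\infty = 2$. For the inductive step, $z^{(\ell)}_i = \sum_j (W_\ell)_{ij}\, u^{(\ell-1)}_j + (b_\ell)_i$, where the additive constant does not affect the derivative, so Lemma~\ref{lemma:lin_i} together with equation~\ref{def:inf_norm} gives
\[
I\big((z^{(\ell)}_i)'\big) \;\le\; \sum_j |(W_\ell)_{ij}|\; I\big((u^{(\ell-1)}_j)'\big) \;\le\; \Big(\sum_j |(W_\ell)_{ij}|\Big)\, M^{(\ell-1)} \;\le\; \|W_\ell\|_\infty\, M^{(\ell-1)} ,
\]
and Lemma~\ref{lemma:relu_i}, applied in each coordinate, gives $I\big((u^{(\ell)}_i)'\big) \le I\big((z^{(\ell)}_i)'\big)$; hence $M^{(\ell)} \le \|W_\ell\|_\infty\, M^{(\ell-1)}$.

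Treating the last linear layer with Lemma~\ref{lemma:lin_i} and equation~\ref{def:inf_norm} in the same way then yields
\[
I\big(h_{W_0,b_0}'\big) \;\le\; \|W_n\|_\infty\, M^{(n-1)} \;\le\; 2\prod_{i=1}^{n}\|W_i\|_\infty \;\le\; 2k ,
\]
which is the claim; note the bound depends on $W_0$ only through $\|W_0\|_\infty = 1$ and not at all on $b_0$. I expect the only genuinely delicate points to be the piecewise-linearity induction that licenses Lemma~\ref{lemma:relu_i} layer by layer, and the observation that restricting to a line feeds an affine, hence constant-derivative, signal into the network: that is exactly why the starting quantity is $2\|W_0\|_\infty$ rather than something larger, and it is the sole origin of the factor $2$ in the final bound.
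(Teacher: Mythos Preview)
Your proposal is correct and follows essentially the same argument as the paper: restrict to the line, track the layerwise maximum of $I(\cdot)$ starting from $2\|W_0\|_\infty=2$, apply Lemma~\ref{lemma:lin_i} at each linear map and Lemma~\ref{lemma:relu_i} at each ReLU to pick up a factor $\|W_i\|_\infty$ per layer, and conclude. If anything, you are slightly more careful than the paper in explicitly verifying the piecewise-linearity (hence ``eventually constant'') hypothesis that licenses Lemma~\ref{lemma:relu_i}.
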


\begin{proof}
Proof is relegated to appendix \ref{proof:atomic}
\end{proof} 

A function in $\mathcal{A}_k^{\infty}$ has a hard limit on the intrinsic variability of its slope along a line through its input space. If we try to learn the absolute value function while penalizing the bound $k$, we will inevitably end up with training objectives that are in direct competition with one another. One can imagine more difficult cases where there is some oscillation in the data manifold and the bounds deteriorate further: for instance $sin(x)$ is also 1-Lipschitz, but can only be approximated with arbitrarily small error by a member of $\mathcal{A}_\infty^{\infty}$. While this limit is specific to $\mathcal{A}_k^{\infty}$, since $\|W\|_2 \leq \|W\|_{\infty}$, it also provides a limit to $\mathcal{A}_k^{2}$.

\section{Paired-layer Lipschitz Constants and Their Limitations}\label{sec:sdp}

We have shown the limitations of the atomic bounding method both experimentally and theoretically, so naturally we look for other approaches to bounding the Lipschitz constant of neural network layers. A fairly successful approach was given by~\cite{Raghunathan2018CertifiedDA}. ~\cite{Raghunathan2018CertifiedDA} presents a method for bounding a fully connected neural network with one hidden layer and ReLU activations, which yielded impressive performance on the MNIST dataset. This approach optimizes the weights of the two layers in concert, so we call it the \textit{paired-layer} approach. The paper does not attempt to extend the method to deeper neural networks, but it can be done in a relatively straightforward fashion.

\subsection{Certifying a Two-layer Neural Network}
Ignoring biases for notational convenience, a two-layer neural network with weights $W_1$ and $W_2$ can be expressed 
\begin{equation}
f(x) = W_2 diag(s) W_1 x
\end{equation}
where $s = W_1 x > 0$. We consider a single output, although extending to a multi-class setting is straightforward. If $s$ were fixed, such a network would be linear with Lipschitz constant $\|W_2 diag(s) W_1\|_p$. ~\cite{Raghunathan2018CertifiedDA} accounts for a changeable $s$ by finding the assignment of $s$ that maximizes the $L_{\infty}$ Lipschitz constant and using this as a bound for the real Lipschitz constant:
\begin{equation}\label{eq:qpbound}
k \leq \max_{s \in \{0,1\}^d} \|W_2 diag(s) W_1\|_{\infty}
\end{equation}

They convert this problem to a mixed integer quadratic program and bound it in a tractable and differential manner using semi-definite programming, the details of which are explained in ~\cite{Raghunathan2018CertifiedDA}. We can add a penalty on this quantity to the objective function to find a model with relatively high accuracy and low Lipschitz constant. We did not have access to the training procedure developed by ~\cite{Raghunathan2018CertifiedDA}, but we were able to closely replicate their results on MNIST and compare them to the atomic bounding approach, shown in figure \ref{fig:sdp} (a).

\begin{figure}
\centering
\includegraphics[scale=0.6]{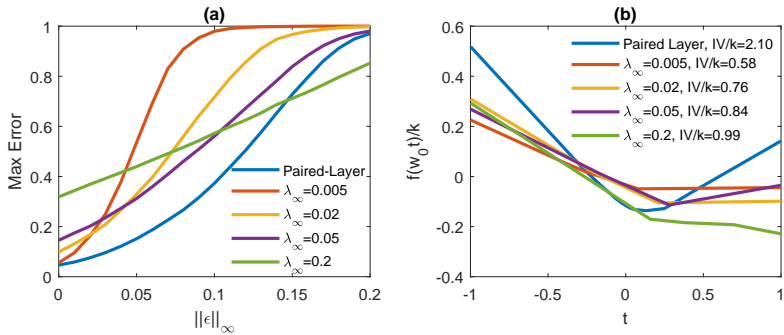}
\caption{(a) Results comparing penalizing the atomic Lipschitz bound and the paired-layer bound (b) Neural network outputs along the line $w_0 t$ and their intrinsic varibilities. Values are scaled by the given Lipschitz constant}\label{fig:sdp}
\end{figure}


\subsection{Theoretical Benefits and Limitations of Paired-layer Approach}

Figure \ref{fig:sdp} shows that there are practical benefits to the paired-layer approach, and we can also show a corresponding increase in expressive power. Similar to $\mathcal{A}_k^p$, we define a set of neural networks $\mathcal{M}_k$, although we will restrict the definition to 2 layer networks in $L_{\infty}$ space:

\begin{definition}\label{def:sdpnn}
Let $\mathcal{M}_k$ be the set of two-layer neural networks with a paired-layer Lipschitz bound of k in $L_{\infty}$ space:

\begin{equation}
\mathcal{M}_k \triangleq \Big\{l_{W_2,a_2} \circ ReLU \circ l_{W_1,a_1} \mid \max_{s \in \{0,1\}^d} \|W_2 diag(s) W_1\|_{\infty} \leq k \Big\}
\end{equation}
\end{definition} 

$\mathcal{M}_k$ can express functions that $\mathcal{A}_k^{\infty}$ cannot. For example, we can apply the paired-layer method to the neural network in figure \ref{fig:abs} by enumerating the different cases. In this case the bound is tight, meaning that the neural network is in $\mathcal{M}_1$. From Theorem \ref{theorem:inf_tv}, we know that this function cannot be expressed by any member of $\mathcal{A}_1^{\infty}$. It is easy to see that any two layer neural network in $\mathcal{A}_k^{\infty}$ is also in $\mathcal{M}_k$, so we can say confidently that the paired-layer bounds are tighter than atomic bounds. 

This additional expressiveness is not merely academic. Figure \ref{fig:sdp} (b) shows the output of the networks from (a) along a particular line in input space, scaled by the given Lipschitz bound. The function learned by the paired-layer method does in fact exhibit an intrinsic variability larger than $2k$, meaning that function cannot be represented by a network in $\mathcal{A}_k^{\infty}$. This suggests that the gains in performance may be coming from the increased expressiveness of the model family.




It is still easy to construct functions for which the paired-layer bounds are loose, however. Figure \ref{fig:f2} shows a 1-Lipschitz function and a corresponding neural network that is only in $\mathcal{M}_2$. The problem arises from the fact that the two hidden units cannot both be on, but the quadratic programming problem in equation \ref{eq:qpbound} implies that they can. For a 1-D problem, the bound essentially adds up the magnitudes of the paths with positive weights and the paths with negative weights and takes the maximum. A higher dimensional problem can be reduced to a 1-D problem by considering arbitrary lines through the input space. 

\begin{figure}
\centering
\includegraphics[scale=0.6]{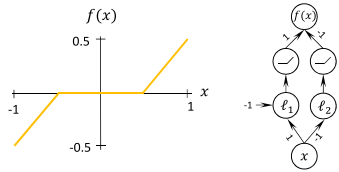}
\caption{A 1-Lipchitz function (left) and a neural network that implements it (right)}\label{fig:f2}
\end{figure}

The expressive limitations of $\mathcal{M}_k$ are apparent when we consider its components. Any neural network in $\mathcal{M}_k$ is a sum of combinations of the four basic forms in figure \ref{fig:sdp_comp}, with various biases and slopes. The sum of the slope magnitudes from the positive paths can be no greater than $k$, and likewise for the negative paths. Each form has a characteristic way of affecting the slope at the extremes and changing the slope.  For instance form (a) adds a positive slope at $+\infty$ as well as a positive change in $f'$. From here we can see that there is still a connection between the total variation and extreme values of $f'$ and the bound $k$. While the paired-layer bounds are better than the atomic ones, they still become arbitrarily bad for e.g., oscillating functions.

\begin{figure}
\centering
\includegraphics[scale=0.6]{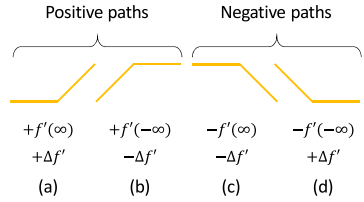}
\caption{The four forms of components of a two layer neural network, and their distinguishing characteristics}\label{fig:sdp_comp}
\end{figure}

\section{Conclusions}\label{sec:concl}

We have presented a case that existing methods for computing a Lipschitz constant of a neural network suffer from representational limitations that may be preventing them from considerably stronger robustness guarantees against adversarial examples. Addressing these limitations should enable models that can, at a minimum, exhibit strong guarantees for training data and hopefully extend these to out-of-sample data. Ideally, we envision \textit{universal Lipschitz networks}: a family of neural networks that can represent an arbitrary k-Lipschitz function with a tight bound. The development of such a family of models and methods for optimizing them carries the potential of extensive gains in adversarial robustness.

\paragraph{Acknowledgement:} This research was partially sponsored by the U.S. Army Research Laboratory and was accomplished under Cooperative Agreement Number W911NF-13-2-0045 (ARL Cyber Security CRA). The views and conclusions contained in this document are those of the authors and should not be interpreted as representing the official policies, either expressed or implied, of the Army Research Laboratory or the U.S. Government. The U.S. Government is authorized to reproduce and distribute reprints for Government purposes notwithstanding any copyright notation here on.

\bibliographystyle{spmpsci}
\bibliography{refs}{}
\appendix

\section{Proofs}
\subsection{Proof of Proposition \ref{prop:lip}}\label{proof:prop}
\begin{proof}
Consider the function 
\begin{equation}
f(x) =
  \begin{cases}
                                   1-\frac{2}{c}||x-x^+||_p & \text{if $||x-x^+||_p< \frac{c}{2}$} \\
                                   -1+\frac{2}{c}||x-x^-||_p & \text{if $||x-x^-||_p< \frac{c}{2}$} \\
  0 & \text{otherwise}
  \end{cases}
\end{equation}
where $x^+$ and $x^-$ are the closest vectors to $x$ in $\mathcal{D}$ with $y=1$ and $y=-1$, respectively. Since $||x^+-x^-||_p > c$, the conditions are mutually exclusive. When $y_i = 1$ and $||\delta||_p < \frac{c}{2}$, 
\begin{equation} 
f(x_i + \delta) = 1-\frac{2}{c} ||x_i^+-x^+ + \delta||_p \geq 1-\frac{2}{c}||\delta||_p \geq 0.
\end{equation}
The inverse is true for $y_i = -1$, therefore $sign(f(x_i + \delta)) = y_i$ holds for all $i$. $f$ is continuous at the non-differentiable boundaries between the piecewise conditions of $f$ and the selections of $x^+$ and $x^-$. Therefore, it suffices to show that each continuously differentiable piece is $\frac{2}{c}$-Lipschitz. Using Definition \ref{def:Lipschitz}, we must show 
\begin{equation} 
|f(x)-f(x+\delta)| \leq \frac{2}{c}||\delta||_p.
\end{equation}
For the first condition of $f$ with a fixed $x^+$, we get
\begin{equation} 
\Bigg|1-\frac{2}{c}||x-x^+||_p - \Bigg(1-\frac{2}{c}||x+\delta-x^+||_p\Bigg) \Bigg| \leq \frac{2}{c}||\delta||_p
\end{equation}
\begin{equation} 
\frac{2}{c}\Big|||x-x^+||_p - ||x+\delta-x^+||_p \Big| \leq \frac{2}{c}||\delta||_p,
\end{equation}
which holds for $p \geq 1$ due to the Minkowski inequality. The same holds for the second condition. Since the third condition is constant, $f(x)$ must be $\frac{2}{c}$-Lipschitz and the proof is complete.  \qed
\end{proof}
\subsection{Proof of Lemma \ref{lemma:lin_i}}\label{proof:lin}
\begin{proof}

Using the chain rule, we get
\begin{equation}
f'(t) = \sum_i w_i f_i'(t).
\end{equation}

The triangle inequality gives us the following two inequalities
\begin{equation}\label{eq:i1}
\lvert f'(t)\rvert  = \Big\lvert \sum_i w_i f_i'(t) \Big\rvert \leq \sum_i \lvert w_i f_i'(t)\rvert = \sum_i \lvert w_i \rvert \lvert f_i'(t)\rvert
\end{equation}
\begin{equation}\label{eq:i2}
\lvert f'(t_i)-f'(t_{i-1})\rvert \leq \sum_i \lvert w_i f_i'(t_i)-w_i f_i'(t_{i-1})\rvert = \sum_i \lvert w_i\rvert \lvert f_i'(t_i)-f_i'(t_{i-1})\rvert
\end{equation}

Let $T_{f'}$ be a maximal partition for $V_{\infty}^{\infty}(f')$, giving us

\begin{equation}
I(f') = \sum_{t_i \in T_{f'}}\lvert f'(t_i)-f'(t_{i-1})\rvert + \lvert f'(\infty)\rvert + \lvert f'(-\infty)\rvert
\end{equation}

We complete the proof by substituting with (\ref{eq:i1}) and (\ref{eq:i2}) and reordering the terms :

\begin{equation}
I(f') \leq \sum_i \lvert w_i\rvert \Big(   \sum_{t_i \in T_f} \lvert f_i'(t_i)-f_i'(t_{i-1})\rvert +  \lvert f_i'(\infty)\rvert +   \lvert f_i'(-\infty)\rvert \Big) = \sum_i |w_i|I(f_i').
\end{equation}




\qed
\end{proof}
\subsection{Proof of Lemma \ref{lemma:relu_i}}\label{proof:relu}
\begin{proof}

Let $[t_-,t_+]$ be an interval outside of which $f'(t)$ is constant. Assume that $f'(t) > 0$ for $t \in [t_-,t_+])$. In this case, 
\begin{equation}
V_{t_-}^{t_+}(g') = V_{-\infty}^{\infty}(f'). 
\end{equation}

If $f'(-\infty) > 0$ then at some point $t<t_-$, $f(t)=0$ and $g'$ transitions from $f'(-\infty)$ to 0. Otherwise for $t<t_-$, $g'(t)=f'(t)$. Therefore, 

\begin{equation}
V_{-\infty}^{t_-}(g') + \lvert g'(-\infty)\rvert = \lvert f'(-\infty)\rvert
\end{equation}

Similarly,
\begin{equation}
V_{t_+}^{\infty}(g') + \lvert g'(\infty)\rvert = \lvert f'(\infty)\rvert
\end{equation}

Putting the different intervals together, we get
\begin{equation}
I(g') = V_{-\infty}^{t_-}(g') + \lvert g'(-\infty)\rvert + V_{t_+}^{\infty}(g') + \lvert g'(\infty)\rvert + V_{t_-}^{t_+}(g') 
\end{equation}
\begin{equation}
I(g') = \lvert f'(-\infty)\rvert + \lvert f'(\infty)\rvert + V_{-\infty}^{\infty}(f')
\end{equation}
\begin{equation}
I(g') \leq I(f')
\end{equation}
So the statement holds when our assumption about $f$ is met. To address cases where $f$ has negative values in $[t_-,t_+]$, consider an interval $(t_1, t_2)$ where $g(t_1)=f(t_1), g(t_2)=f(t_2), g(t) \neq f(t) for t_1 < t < t_2$. We note that $f'(t_1) < 0$ and $f'(t_2) > 0$. Since $f'$ must transition from $f'(t_1)$ to $f'(t_2)$, over $(t_1, t_2)$,
\begin{equation}
V_{t_1}^{t_2}(f') \geq \lvert f'(t_1) \rvert + \lvert f'(t_2) \rvert.
\end{equation}

Since $g'$ transitions from $f'(t_1)$ to 0 to $f'(t_2)$ over $(t_1, t_2)$ so,
\begin{equation}
V_{t_1}^{t_2}(g') = \lvert f'(t_1) \rvert + \lvert f'(t_2) \rvert.
\end{equation}

Applying this to all such intervals gives us 
\begin{equation}
V_{t_-}^{t_+}(g') \leq V_{t_-}^{t_+}(f')
\end{equation}
and therefore $I(g') \leq I(f')$

\qed
\end{proof}
\subsection{Proof of Theorem \ref{theorem:inf_tv}}\label{proof:atomic}
\begin{proof}
Combining the definition of $h_{W_0,b_0}$ with Definition \ref{def:nn}, we can see that $h_{W_0,b_0} = l_{W_n,b_n} \circ \dots \circ ReLU \circ l_{W_1,b_1} \circ l_{W_0,b_0}$ and $\prod_{i=0}^n\|W_i\|_{\infty} \leq k$. We consider the additional linear transform as the zeroth layer of a modified network. Consider unit $u$ in the zeroth layer as a function $\sigma_{0,u}(t)$. $\sigma_{0,j}'(t)$ is constant, with 
\begin{equation}
\sigma_{0,u}'(t) = \lvert w^0_{u,1} \rvert \leq 1
\end{equation}
where $w^i_{u,v}$ is element $(u,v)$ of $W_i$. Therefore
\begin{equation}
V_{-\infty}^{\infty}(\sigma_{0,u}') = 0
\end{equation}

We also have $\forall t, \lvert \sigma_{0,u}' \rvert = \lvert w^0_{u,1}\rvert$, so by Definition \ref{def:intrinsic}

\begin{equation}\label{eq:base_case}
I(\sigma_{0,u}') = 2\lvert w^0_{u,1}\rvert \leq 2.
\end{equation}

We recursively define functions for each unit in layers $1$ to $n$:
\begin{equation}
g_{i,v}(t) = \sum_u w^i_{u,v} \sigma_{i-1,u}(t)
\end{equation}
\begin{equation}
\sigma_{i,u}(t) = \max(g_{i,u}(t),0)
\end{equation}

Applying Lemma \ref{lemma:relu_i} and noting that a function composed of ReLU and linear operators is eventually constant, we get 
\begin{equation}
I(\sigma_{i,u}') \leq I(g_{i,u}')
\end{equation}
Applying Lemma \ref{lemma:lin_i}, we get 
\begin{equation}
I(g_{i,v}') \leq \sum_u |w^i_{u,v}| I(\sigma_{i-1,u})
\end{equation}
Furthermore, we can say 
\begin{equation} \label{eq:recursive}
\max_v I(g_{i,v}') \leq \|W_i\|_{\infty} \max_u I(g_{i-1,u}')
\end{equation}
Finally, we conclude the proof by recursively applying (\ref{eq:recursive}) on the base case in (\ref{eq:base_case}) to yield
\begin{equation}
I(h_{W_0,b_0}') = I(g_{n,1}') \leq 2\prod_{i=1}^{n}\|W_i\|_{\infty} \leq 2k
\end{equation}
\qed
\end{proof}

\end{document}